\newtheorem{theorem}{Theorem}
\newtheorem{proposition}[theorem]{Proposition}
\newtheorem{definition}{Definition}
\newcommand{\BibTeX}{B\kern-.05em{\sc i\kern-.025em b}\kern-.08em\TeX}
\begin{document}


\begin{frontmatter}


\paperid{515} 


\title{PAGE: Parametric Generative Explainer for Graph Neural Network}


\author[A]{\fnms{Yang}~\snm{Qiu}}
\author[A]{\fnms{Wei}~\snm{Liu}}
\author[B]{\fnms{Jun}~\snm{Wang} \thanks{Corresponding Author~(Email: jwang@iwudao.tech).}} 
\author[A]{\fnms{Ruixuan}~\snm{Li}\thanks{Corresponding Author~(Email: rxli@hust.edu.cn). This paper is a collaboration between Intelligent and Distributed Computing Laboratory, Huazhong University of Science and Technology and iWudao Tech.}}

\address[A]{School of Computer Science and Technology, Huazhong University of Science and Technology, China}
\address[B]{iWudao Tech}


\begin{abstract}
This article introduces PAGE, a \underline{pa}rameterized \underline{ge}nerative interpretive framework. PAGE is capable of providing faithful explanations for any graph neural network without necessitating prior knowledge or internal details. Specifically, we train the autoencoder to generate explanatory substructures by designing appropriate training strategy. Due to the dimensionality reduction of features in the latent space of the autoencoder, it becomes easier to extract causal features leading to the model's output, which can be easily employed to generate explanations. To accomplish this, we introduce an additional discriminator to capture the causality between latent causal features and the model's output. By designing appropriate optimization objectives, the well-trained discriminator can be employed to constrain the encoder in generating enhanced causal features. Finally, these features are mapped to substructures of the input graph through the decoder to serve as explanations. Compared to existing methods, PAGE operates at the sample scale rather than nodes or edges, eliminating the need for perturbation or encoding processes as seen in previous methods. Experimental results on both artificially synthesized and real-world datasets demonstrate that our approach not only exhibits the highest faithfulness and accuracy but also significantly outperforms baseline models in terms of efficiency. 
\end{abstract}

\end{frontmatter}


\section{Introduction}



Graph Neural Networks (GNNs) have consistently demonstrated promising results across a wide variety of tasks~\citep{wuComprehensiveSurveyGraph2021,xuHowPowerfulAre2019,ZHOU202057}. As GNNs find applications in crucial domains where trustworthy AI is imperative, the need for their interpretability has become increasingly paramount~\citep{yuanExplainabilityGraphNeural2023}.
GNNExplainer is the first general model-agnostic approach for interpreting GNNs, and searches for soft masks for edges and node features to explain the predictions via mask optimization~\citep{yingGNNExplainerGeneratingExplanations2019a}.
Since GNNExplainer largely focuses on providing the local interpretability by generating a painstakingly customized explanation for a single instance individually and independently, it is not sufficient to provide a global understanding of the trained model~\citep{luoParameterizedExplainerGraph2020a}. 
Furthermore, GNNExplainer has to be retrained for every single explanation. So, in real-world scenarios, GNNExplainer would be time-consuming and impractical.
To address the above issues, PGExplainer was proposed to learn succinct underlying structures as the explanations from the observed graph data~\citep{luoParameterizedExplainerGraph2020a}. It also models the underlying structure as edge distributions, where the explanatory subgraph is sampled using binary hard masks for edges.
PGExplainer utilizes a deep neural network to parameterize the explanation selection process, providing the ability to collectively explain multiple instances, so it has better generalization ability and can be utilized in an inductive setting easily~\citep{luoParameterizedExplainerGraph2020a}.
Since there is no ground truth for explanatory subgraphs, PGExplainer has to employ reinforcement learning or Gumbel-Softmax sampling to search for informative explanatory subgraphs within a vast space that grows exponentially with the number of edges. Consequently, the computational complexity remains relatively high~\citep{linGenerativeCausalExplanations2021}.

Instead of using edge masks, recent methods have attempted to produce the adjacency matrix of explanatory subgraphs directly through a Variational
Graph Auto-encoder (VGAE)~\citep{kipfVariationalGraphAutoEncoders2016}, eliminating the need to consider each node or edge individually and achieving greater efficiency. 
GEM is the pioneering work in this domain. It introduced a distillation process grounded in the concept of Granger causality~\citep{bresslerWienerGrangerCausality2011} to generate ground-truth explanatory subgraphs for the training of VGAE~\citep{linGenerativeCausalExplanations2021}. However, the distillation process intrinsically assumes independence between the edges. This could be problematic as graph-structured data is inherently interdependent~\citep{linOrphicXCausalityInspiredLatent2022}.
Differing from GEM, which quantifies the causal influence from edges, OrphicX~\citep{linOrphicXCausalityInspiredLatent2022} suggests identifying the underlying causal factors from the latent space, allowing it to bypass direct interaction with intricately interdependent edges.
It divides the latent representation produced by the encoder into causal features and spurious features and identifies the underlying causal features by harnessing information flow measurements~\citep{ayINFORMATIONFLOWSCAUSAL2008}, quantifying the causal information emanating from the latent features. It then constructs the adjacency matrix for the explanatory subgraph based on these causal features~\citep{linOrphicXCausalityInspiredLatent2022}.

The fundamental architecture of Gem and OrphicX is depicted in Figure~\ref{figure1}. Gem, while illustrating the viability of training VGAE as a graph interpreter, is subject to certain constraints. Gem's methodology involves a distillation process aimed at generating ground-truth subgraphs for training the autoencoder. This process perturbs the input graph at the edge level and assesses the importance of each edge based on the resulting reduction in model error. However, this approach assumes edge independence, failing to adequately explore the ability to discern causal features within the autoencoder's latent space.
OrphicX endeavors to utilize information flow estimation to evaluate the causal relationships between each feature dimension in the latent space and the model predictions. This strategy aims to bypass perturbations from input dimensions. Nonetheless, due to computational constraints and complexity considerations, OrphicX operates within a severely restricted sampling range for each sample. 
For instance, regardless of the size of a single input graph, OrphicX samples only five nodes of the graph to estimatie the causal effects of the whole graph on the output, which is obviously a limitation that undermines practicality and necessary adaptability. Moreover, the efficacy of information flow measurements in OrphicX appears limited. This assertion is supported by our experimental findings, which indicate that excluding the information flow term from the loss function has minimal impact on the model's performance.
Furthermore, both the distillation process in Gem and the information flow measurement in OrphicX necessitate numerous perturbations or samplings on individual samples. These procedures incur significant time and space complexity, posing challenges to the application of these methods on large-scale graphs and extensive datasets.

To address these problems in OrphicX, we introduce a novel and more efficient model termed the \underline{Pa}rametric \underline{Ge}nerative Explainer (PAGE\footnote{Code is available at \textit{https://github.com/anders1123/PAGE/}}).
Specifically, our optimization objective remains to maximize the mutual information between explanations and original predictions, but we introduce an extra parameterized discriminator to acquire the global understanding of the causal features in the latent space. 
As illustrated in Figure~\ref{figure1}(c) and Figure~\ref{figure2}, our model consists of an autoencoder and an additional discriminator. The features in the latent space compressed by the encoder are divided into two parts: causally relevant features related to predictions are used to generate explanations, while non-causal features are discarded. The training process consists of two stages: In the first stage, autoencoder and discriminator are trained together. The well-trained parameterized discriminator is designed to maximize the mutual information between causal features and prediction results. Parameterized discriminator eliminates the need for information flow estimation through sampling. Furthermore, since the discriminator is well-trained, it possesses a global comprehension on the entire dataset. In the second stage, the parameters of the discriminator are frozen to constrain the encoder to learn better causal features. Gem requires calculating causal contributions for each edge of a instance, while OrphicX involves extensive sampling across different dimensions including samples and features. In comparison, our approach is significantly more efficient than these methods. The trained explainer can collectively generate explanations, eliminating the need for extensive sampling and computations. 

Our main contributions can be outlined as follows:

1) We introduce PAGE, a novel generative GNN explanation method. This method replaces complicated sampling processes with a streamlined discriminator, enhancing effectiveness.

2) We further refine the optimization objective, eliminating the need of complex perturbation or sampling process. Instead, we directly align predictions from the original graph with those from the explanatory subgraph.

3) Our research included experiments spanning both node and edge levels, conducted on both synthesized and real-world datasets. Experimental results clearly show that our approach outperforms existing methods across various metrics. This includes, but is not limited to, confidence and accuracy. Moreover, our method boasts significantly higher efficiency when juxtaposed with prior methods.

\begin{figure}[t]
    \centering
    \includegraphics[width=1.0\columnwidth]{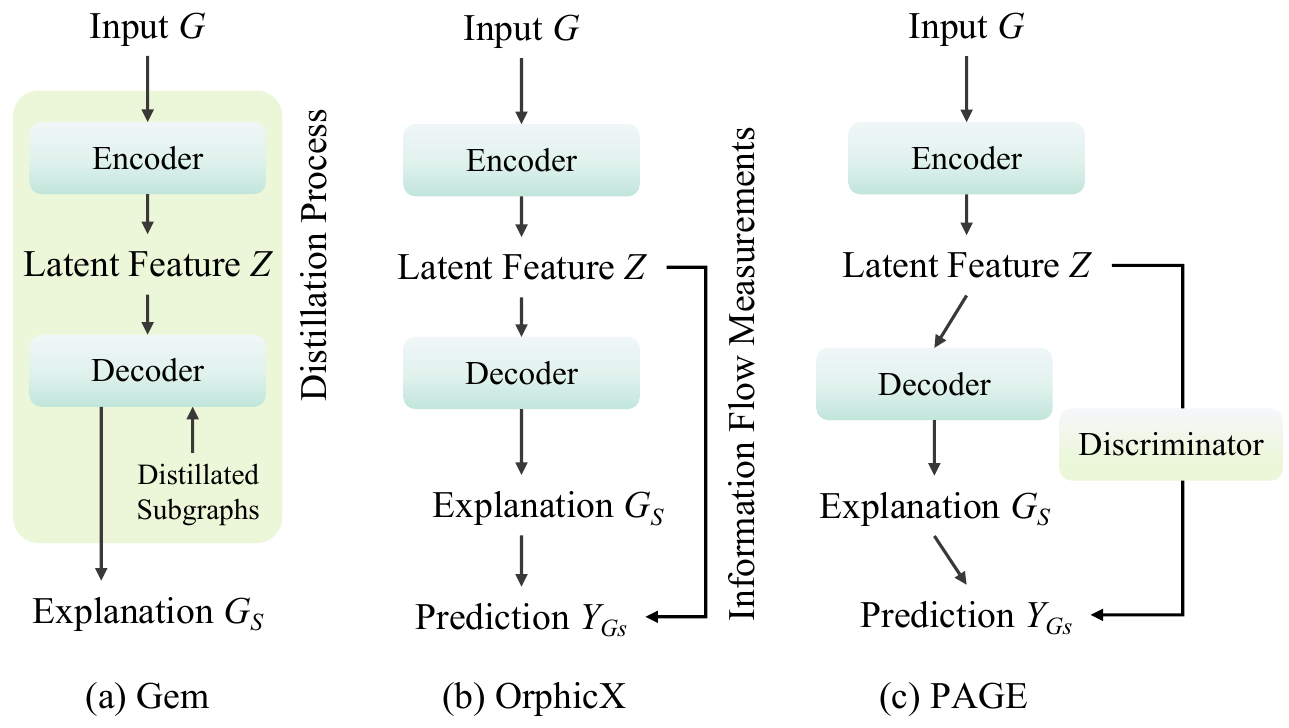}
    \caption{Methods using autoencoder as explainer: (a) Gem\cite{linGenerativeCausalExplanations2021}, (b) OrphicX\cite{linOrphicXCausalityInspiredLatent2022}, (c) PAGE(Ours)}
    \label{figure1}
    \vspace{0.8cm}
    \end{figure}

\section{Related Work}
\vspace{-0.02cm}
Post-hoc GNN Explanation aims to produce an explanation for a GNN prediction on a given graph, usually as a substructure of the graph. Various explaining approaches focus on different aspects of the model and also provide different views. Many surveys categorized and summarized existing GNN explaining methods~\citep{yuanExplainabilityGraphNeural2021, ratheeBAGELBenchmarkAssessing2022, liExplainabilityGraphNeural2022, baldassarreExplainabilityTechniquesGraph2019}. 
Some methods provide explanations for each individual input instance separately, such as GNNExplainer~\citep{yingGNNExplainerGeneratingExplanations2019a}, while some train a parameterized explainer to provide explanations collectively for multiple instances, like PGExplainer~\citep{luoParameterizedExplainerGraph2020a}. Depending on whether the explainer generates explanations separately or collectively, we categorize existing explanation methods into non-parameterized and parameterized approaches.

\textbf{Non-parameterized:} Non-parameterized methods like GNNExplainer generate explanations for individual instances and predictions through perturbation or optimization processes. Some gradient-based methods compute gradients of the target prediction with respect to input features through back-propagation. Moreover, feature-based methods map hidden features to the input space via interpolation to calculate importance scores. Examples of such methods include SA~\citep{baldassarreExplainabilityTechniquesGraph2019}, GuidedBP~\citep{baldassarreExplainabilityTechniquesGraph2019}, and Grad-CAM~\citep{popeExplainabilityMethodsGraph2019}. Additionally, there are some methods based on cooperative game theory, which assign importance scores to input nodes and edges by introducing concepts from cooperative games like Shapley Value. Examples include SubgraphX~\citep{yuanExplainabilityGraphNeural2021} and GStarX~\citep{zhangGStarXExplainingGraph2022a}. Moreover, some approaches integrate reinforcement learning by treating the addition or removal of nodes and edges as strategies, while considering the probabilities of the target GNN as rewards, and then utilize a policy network to generate explanations, aiming to maximize the attained rewards,like RC-Explainer~\citep{wangReinforcedCausalExplainer2023} and RG-Explainer~\citep{shanReinforcementLearningEnhanced2021}.

\textbf{Parameterized:} Parameterized methods like PGExplainer train an explainer on the entire training set to provide explanations for multiple instances collectively. 
These methods utilize a parameterized explaining network, typically trained to maximize the mutual information or causal attribution between the explanatory subgraph/structure and the prediction. Subsequently, the network is employed to collectively generate explanations, like Gem~\citep{linGenerativeCausalExplanations2021}, OrphicX~\citep{linOrphicXCausalityInspiredLatent2022} and our method PAGE. Some methods give explanations by developing interpretable surrogate models, which are subsequently employed to approximate the behavior of the target model to be explained. The explanations derived from these surrogate models are then used to interpret the behavior of the target GNN, such as GraphLime~\citep{huangGraphLIMELocalInterpretable2023}, PGM-Explainer~\citep{vuPGMExplainerProbabilisticGraphical2020a}, and GraphSVX~\citep{duvalGraphSVXShapleyValue2021}.

Beyond these two types of methods, there are also model-level explanation approaches that seek to explore the patterns of the input that can induce specific behaviors in the GNN. These approaches offer more general and high-level insights into the GNN model. Examples include XGNN~\citep{yuanXGNNModelLevelExplanations2020}, GLGExplainer~\citep{azzolinGlobalExplainabilityGNNs2023} and GNNInterpreter~\citep{wangGNNInterpreterProbabilisticGenerative2023}. 
Additionally, some methods specifically focus on offering counterfactual explanations, such as CF-GNNExplainer, CF2, and RCExplainer, and some self-explanatory models aim to develop the capability to generate predictions alongside corresponding explanations, like ProtGNN~\citep{zhangProtGNNSelfExplainingGraph2022}. 

\section{Methodology}

\begin{figure*}[t]
    \centering
    \includegraphics[width=2.0\columnwidth]{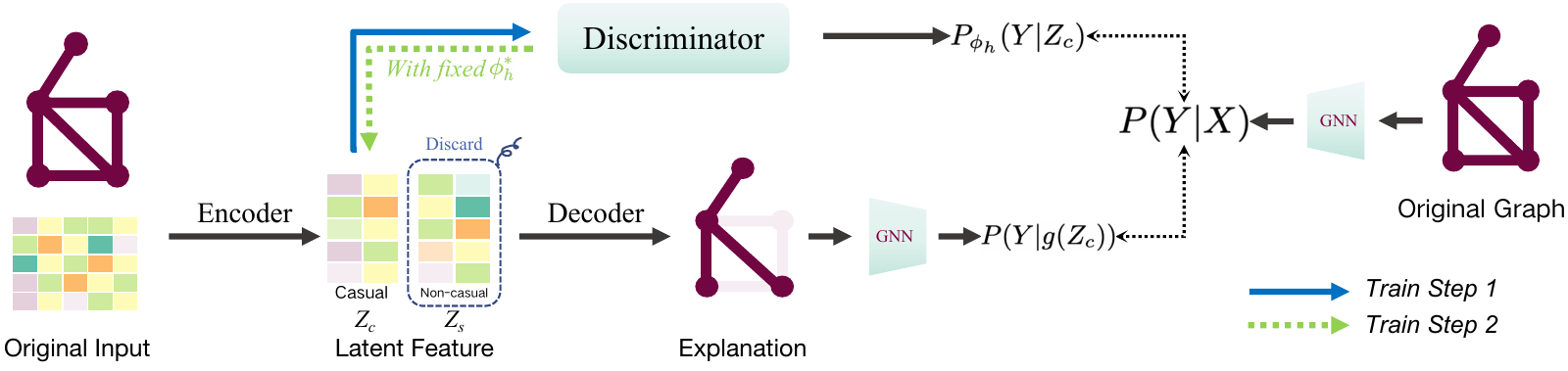}
    \caption{Basic framework of PAGE. The causal part of the latent features related to GNN prediction is used to generate explanations, while the non-causal part is discarded. An additional discriminator is employed to maximize the mutual information between causal features and prediction results. The training process consists of two steps: the first step (indicated by the blue arrows) involves training the autoencoder and the discriminator, and the parameters of the discriminator will be fixed to constrain the encoder in generating better causal features in the second step.}
    \label{figure2}
    \vspace{0.8cm}
    \end{figure*}

\subsection{Preliminary}

\subsubsection{Graph neural network}
The work of graph neural network mainly depends on two mechanisms: message passing and representation aggregation~\citep{JMLR:v12:shervashidze11a}~\citep{kipfSemiSupervisedClassificationGraph2017}. A graph $G$ can be described as $G = ( V , E )$ with node attribute $X$ and adjacency matrix $A$. Taking the Graph Convolutional Network(GCN)~\citep{kipfSemiSupervisedClassificationGraph2017} as an example, the computation process is as follows: 
\begin{equation}
h_{v}^{k+1}=\sigma\left(\sum_{u \in \mathcal{N}(v)}\left(W^{k} h_{u}^{k} \tilde{A}_{u v}\right)\right)
\end{equation}
where $h_u^k$ is the representation of node $u$ at the $k$ th layer in GCN and $\tilde{A}=\hat{D}^{-\frac{1}{2}} \hat{A} \hat{D}^{-\frac{1}{2}}$ is the normalized adjacency matrix. $\hat{A}=A+I$ is the adjacency matrix of the graph $G$ with self loops added and $\hat{D}$ is a diagonal matrix with $\hat{D}_{i i}=\sum_j \hat{A}_{i j}$. $\mathcal{N}(v)$ denote the neighbors of node $v$ and $W^k$ is the weight matrix to be trained of the $k$-th layer. Eq. (1) represents that node $v$ will collect and aggregate the information of all neighboring nodes, which will be used as the representation of node $v$ in the next layer through the activation function $\sigma(*)$. Nodes’ representation will be used to perform node-level tasks or complete graph-level tasks by using readout functions.

\subsubsection{Graph Auto-encoder}

Graph auto-encoder~\citep{kipfVariationalGraphAutoEncoders2016} is a self-supervised learning method. The model consists of a inference network and a generative network, and we call then the encoder and decoder respectively. It mainly retains the following two procedures: It maps the input sample to the hidden space through the GCN encoder to obtain the low-dimensional representation of the sample features, and then latent features are used to reconstruct the original graph structure through a inner product decoder. This process can be formulated as follows:
\begin{equation}
\hat{\mathnormal{A}}=\sigma\left(\mathnormal{Z} \mathnormal{Z}^{\top}\right), \textit{ with } \quad \mathnormal{Z}={GCN}(\mathnormal{X}, \mathnormal{A})
\end{equation}
For graph auto-encoder(GAE), the adjacency matrix reconstructed from $Z$ should be as comparable as possible to the original, so GAE uses cross-entropy as the loss function:
\begin{equation}
\mathcal{L}_{\mathrm{GAE}}=E_{q(Z \mid X, A)}[\log p(A\mid Z)]
\end{equation}
$q$ and $p$ are the encoding and decoding functions, respectively. For variational graph auto-encoder(VGAE), $Z$ is obtained by sampling from a Gaussian distribution instead of by a definite function. In VGAE (Variational Graph Autoencoder), the reparameterization technique is employed to facilitate the backpropagation of gradients. For more detailed information, you can refer to the original literature~\citep{kipfVariationalGraphAutoEncoders2016}. It uses two GCNs that share the same weight in the first layer to generates the means and variances. An additional Kullback-Leibler divergence more than GAE is employed in the loss function:
\begin{equation}
\mathcal{L}_{\mathrm{VGAE}}=E_{q(Z \mid X, A)}[\log p(A \mid Z)]-\mathrm{KL}[q(Z \mid X, A) \| p(Z)]
\end{equation}

\subsubsection{Auto-encoder as an explainer}
One application of autoencoder is to learn low-dimensional representations of the input data utilizing the nonlinear expressive power of neural networks. These low-dimensional representations(denoted as $Z$) in the latent space should be more distinguishable. Based on this principle, we aim to separate the causal feature component (denoted as $Z_c$) related to the model's predictions from the entire features in the latent space and discard the non-causal part(denoted as $Z_s$). This approach avoids separating the causal and non-causal parts in the input spaces as in previous methods. By imposing constraints on the latent space feature along the feature dimensions and setting an appropriate learning objective, we intend to isolate the causal sub-feature $Z_c$ from $Z$ in terms of the feature dimension. Then the inner product decoder can map the sub-matrix $Z_c$ into an adjacency mask, serving as a substructure for explanation.

\subsection{Framework}
As shown in Figure \ref{figure1} , our model mainly consists of three components: the encoder (denoted as $f(\cdot):\mathcal{G} \mapsto \mathcal{Z}$), decoder (denoted as $g(\cdot):\mathcal{Z} \mapsto \mathcal{G}$), and discriminator (denoted as $h(\cdot):\mathcal{Z} \mapsto \mathcal{Y}$). We mark the GNN to be explained as $\mathcal{F(\cdot)}:\mathcal{G} \mapsto \mathcal{Y}$, which gives a predicted label $Y\in \mathcal{Y} $ for each input graph $G\in \mathcal{G} $. The model learns the low-dimensional representation of the input graph through a two-layer GCN encoder. In order to maintain the structural consistency of the reconstructed graph with the original input, latent features $Z=f(A,X) \in\mathcal{Z}$ is employed to calculate the auto-encoder reconstruction loss. Meanwhile, we partition $Z$ into causal and non-causal parts based on the feature dimensions(denoted as $Z=cat(Z_c,Z_s)$). $Z_c$ will be concatenated with a zero matrix to restore the original dimensions of $Z$, and then utilized to generate an adjacency mask, serving as the explanation subgraph(denoted as $G_S, G_S=(A\odot g(Z_c),X)$). Then we have the following assumptions:
\begin{proposition}
    For any $Z_p$ that $Z_p \subseteq Z$, $I(Y;Z_p)\le I(Y;Z)$
\end{proposition}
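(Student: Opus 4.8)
The plan is to recognize this as an instance of the monotonicity of mutual information under marginalization (equivalently, the data processing inequality). The first thing to pin down is the meaning of the notation $Z_p \subseteq Z$: in this paper $Z$ is a latent feature matrix partitioned along the feature dimension, so $Z_p \subseteq Z$ should be read as ``$Z_p$ consists of a subset of the coordinate blocks of $Z$''. After a harmless reordering of coordinates we may therefore write $Z = \mathrm{cat}(Z_p, Z_q)$, where $Z_q$ collects the remaining coordinates, and $Z_p = \pi(Z)$ for the fixed coordinate-projection map $\pi$, which in particular does not depend on $Y$.

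With this in hand, the cleanest route I would write out is the chain rule for mutual information. Since the pair $(Z_p, Z_q)$ is related to $Z$ by a measurable bijection (the reordering of coordinate blocks), we have $I(Y; Z) = I(Y; (Z_p, Z_q))$, and then
\[
I(Y; Z) = I(Y; Z_p, Z_q) = I(Y; Z_p) + I(Y; Z_q \mid Z_p).
\]
Because conditional mutual information is always nonnegative, $I(Y; Z_q \mid Z_p) \ge 0$, which yields $I(Y; Z_p) \le I(Y; Z)$ at once. As an alternative phrasing I would note that, since $Z_p$ is a deterministic function of $Z$, conditioning on $Z$ makes $Z_p$ independent of $Y$, so $Y \to Z \to Z_p$ forms a Markov chain and the data processing inequality gives the same conclusion; the chain-rule version has the advantage of being self-contained.

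I do not anticipate a serious obstacle here: the statement is essentially a restatement of the fact that discarding feature dimensions cannot increase the information retained about $Y$. The only points requiring mild care are (i) making precise that $Z_p \subseteq Z$ is a containment of coordinate blocks rather than a set-theoretic containment of the ranges of the random variables, and (ii) checking that the mutual information quantities are well-defined so that the chain-rule decomposition and the nonnegativity of conditional mutual information apply (this holds in general in the extended reals, and in the finite-dimensional setting at hand the quantities are finite). Once these are noted, the argument is a one-liner.
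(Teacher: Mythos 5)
Your proof is correct and is essentially the paper's own argument in different notation: the paper writes $I(Y;Z) - I(Y;Z_p) = H(Y|Z_p) - H(Y|Z_p,Z_+) \ge 0$ via ``conditioning reduces entropy,'' which is exactly the nonnegative conditional mutual information term $I(Y;Z_q\mid Z_p)$ in your chain-rule decomposition. Your write-up is somewhat more careful about what $Z_p \subseteq Z$ means, but the mathematical content is the same.
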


\begin{proof}
    $I(Y;Z_p)\le I(Y;Z)$ is equivalent to $H(Y|Z_p)\ge H(Y|Z)$ where $H$ denotes entropy, since $H(Y)$ is a constant. $Z_p \subseteq Z$ means $Z_p$ is a subset of $Z$, i.e., $Z = Z_p\cup Z_+$. Therefore, $H(Y|Z_p)\ge H(Y|Z_p,Z_+) = H(Y|Z), i.e., I(Y;Z_p)\le I(Y;Z)$
\end{proof}

\begin{proposition}
    For causal features in the latent space, they should be strongly correlated with the model predictions(i.e., mutual information $I(Y;Z_c)=I(Y;Z)$ ). Then the training criterion of the encoder should be:
    \begin{equation}
        \theta _{E}^{*} =\underset{\theta_E}{\mathrm{argmax} } I(Y;Z_c)\text{, since~} I(Y;Z_c)\le I(Y;Z)
    \end{equation}
    \label{proposition2}
    \end{proposition}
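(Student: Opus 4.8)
The plan is to read off the displayed criterion as an immediate corollary of the preceding proposition together with the way $Z_c$ was defined, so the ``proof'' is essentially an unpacking of definitions rather than a computation. First I would note that, by construction, $Z=\mathrm{cat}(Z_c,Z_s)$, so $Z_c\subseteq Z$ in exactly the sense required by the previous proposition; applying that proposition verbatim gives $I(Y;Z_c)\le I(Y;Z)$, which is precisely the inequality attached to the right of the displayed equation. This is the only substantive ingredient and it is already established.

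Next I would spell out what ``causal feature'' is taken to mean here: $Z_c$ is regarded as the causal component of the latent code exactly when it retains \emph{all} of the information the full code $Z$ carries about the prediction $Y$, i.e.\ when the above inequality is tight, $I(Y;Z_c)=I(Y;Z)$. With this reading, an encoder whose sub-features $Z_c$ are genuinely causal is one for which $I(Y;Z_c)$ attains its maximal possible value $I(Y;Z)$; hence training the encoder to produce causal features amounts to maximizing $I(Y;Z_c)$ over the encoder parameters, which is the stated objective $\theta_E^{*}=\arg\max_{\theta_E} I(Y;Z_c)$. Any optimizer that reaches the bound yields causal features in this sense, which is all the proposition asserts.

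The one point I would be careful about — and the main, though mild, obstacle — is that the upper bound $I(Y;Z)$ itself depends on $\theta_E$, so ``maximize $I(Y;Z_c)$'' is not a priori the same as ``drive the gap $I(Y;Z)-I(Y;Z_c)$ to zero.'' I would resolve this by appealing to the fact that the encoder is simultaneously constrained by the reconstruction loss to keep $Z$ a faithful, information-rich representation of the input, so along the feasible region $I(Y;Z)$ stays essentially saturated and maximizing $I(Y;Z_c)$ does close the gap. I would also remark that $I(Y;Z_c)$ cannot be evaluated in closed form, which is exactly why the parametric discriminator is introduced below as a tractable surrogate for this objective; that substitution is an implementation matter and does not affect the derivation of the criterion.
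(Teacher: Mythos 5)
Your proposal is correct and matches the paper's (largely implicit) reasoning: the paper offers no separate proof for this proposition, treating the inequality as an immediate consequence of Proposition~1 applied to $Z_c\subseteq Z=\mathrm{cat}(Z_c,Z_s)$ and the criterion as following from the definition of causal features as those for which the bound is tight. Your extra remark that the upper bound $I(Y;Z)$ itself moves with $\theta_E$ --- and that the reconstruction loss is what keeps it saturated --- is a caveat the paper glosses over, but it does not change the argument.
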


\begin{proposition}
    For non-causal features in the latent space, they should be independent of the model predictions($Z_s\perp Y$, i.e., mutual information $I(Y;Z_s)=0 $).
    \end{proposition}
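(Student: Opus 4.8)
The plan is to obtain $I(Y;Z_s)=0$ from the way $Z$ is split, $Z=\mathrm{cat}(Z_c,Z_s)$, combined with the equality granted by Proposition~\ref{proposition2}, $I(Y;Z_c)=I(Y;Z)$, plus an independence (disentanglement) property of the two coordinate blocks. First I would record the two bounds that frame the claim: $I(Y;Z_s)\ge 0$ by non-negativity of mutual information, and $I(Y;Z_s)\le I(Y;Z)$ by Proposition~1 applied to $Z_s\subseteq Z$. So the substance of the statement is to collapse the interval $[0,I(Y;Z)]$ to the single value $0$.

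The central step is the chain rule for mutual information applied to the pair $(Z_c,Z_s)$:
\[
I(Y;Z)=I(Y;Z_c,Z_s)=I(Y;Z_c)+I(Y;Z_s\mid Z_c).
\]
Since Proposition~\ref{proposition2} asserts $I(Y;Z_c)=I(Y;Z)$ for the trained encoder, the residual term must vanish, giving $I(Y;Z_s\mid Z_c)=0$, i.e.\ $Y\perp Z_s\mid Z_c$: once $Z_c$ is known, $Z_s$ carries no further information about $Y$. To finish I would promote this conditional independence to an unconditional one using $Z_c\perp Z_s$, a property the model design actively encourages through the factorized prior / KL term of the (V)GAE and the stage-two training that concentrates the causal content in $Z_c$. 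Concretely, writing $p(y,z_s)=\int p(y\mid z_c,z_s)\,p(z_c,z_s)\,dz_c$ and substituting $p(y\mid z_c,z_s)=p(y\mid z_c)$ together with $p(z_c,z_s)=p(z_c)\,p(z_s)$ factors the integral as $p(z_s)\int p(y\mid z_c)\,p(z_c)\,dz_c=p(y)\,p(z_s)$, hence $Y\perp Z_s$ and $I(Y;Z_s)=0$.

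The main obstacle is precisely that last implication: $I(Y;Z_s\mid Z_c)=0$ by itself does not force $I(Y;Z_s)=0$, so the argument genuinely relies on $Z_c\perp Z_s$. The cleanest write-up will therefore either state this disentanglement as an explicit modelling assumption, or argue that in the idealized generative picture, where $Y$ is a noisy function of the causal factors alone, designating $Z_s$ as \emph{non-causal} is by construction the assertion that it is uninformative about $Y$. Everything else reuses only the entropy and mutual-information identities already invoked in the proofs of Propositions~1 and~\ref{proposition2}.
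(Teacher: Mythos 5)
The paper offers no proof of this statement: the three propositions are introduced with ``Then we have the following assumptions'' and are afterwards described as ``hypotheses [that] can only hold true when the parameter of encoder $\theta_E$ are optimal.'' In other words, $I(Y;Z_s)=0$ is posited as part of what it \emph{means} for $Z_s$ to be the non-causal block at the optimum; it is not derived from anything. Your proposal therefore does more than the paper does, and as a conditional derivation it is sound: the chain rule $I(Y;Z)=I(Y;Z_c)+I(Y;Z_s\mid Z_c)$ together with the equality $I(Y;Z_c)=I(Y;Z)$ from Proposition~\ref{proposition2} gives $I(Y;Z_s\mid Z_c)=0$, and your factorization of $p(y,z_s)$ correctly shows that adding $Z_c\perp Z_s$ upgrades this to marginal independence. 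You are also right that this last step is the crux --- conditional independence given $Z_c$ does not by itself imply unconditional independence, and a counterexample is easy to construct when $Z_c$ and $Z_s$ are correlated.

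The one point to be careful about is that the disentanglement premise $Z_c\perp Z_s$ is nowhere stated or enforced in the paper. The KL term in the VGAE loss pulls $q(Z\mid X,A)$ toward a factorized prior $p(Z)$, but that does not guarantee that the aggregate posterior over the pair $(Z_c,Z_s)$ factorizes, and the second training stage places no explicit independence penalty on the two blocks. So if you present this as a proof you must elevate $Z_c\perp Z_s$ to an explicit modelling assumption, as you suggest in your closing paragraph; otherwise the honest reading is the paper's own, namely that the proposition is a definition of ``non-causal'' rather than a theorem. Either framing is defensible, but they should not be conflated, and your write-up should say clearly which one it adopts.
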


The above hypotheses can only hold true when the parameter of encoder~$\theta _{E}$ are optimal. Similar to previous work, we refer to the features in the latent space that are correlated with the predictions made by the GNN being explained as causal features (denoted as $Z_c$), and the parts that are not correlated as non-causal (denoted as $Z_s$). In other words, these features are the causal attribution behind the predictions made by the GNN model on the original input graph $G$.

However, the mutual information $I(Y;Z_s)$ and $I(Y;Z_c)$ cannot be directly calculated, since $ I(Y;Z_c)=\mathbb{E}\left [ \log_{}{\frac{P(Y|Z_c)}{P(Y)} }  \right ] $ and $P(Y|Z_c)$ is unknown (We can only obtain the distribution $P(Y|G_s))$ given by the target GNN). Following Proposition 1 in GAN~\citep{goodfellowGenerativeAdversarialNets2014}, we introduce an optimal discriminator $h: \mathcal{Z} \mapsto \mathcal{Y}$ with parameter $\phi_h$ to approximate $P(Y|Z_c)$ as $P_{\phi_{h}}(Y|Z_c)$. We then have the following proposition:


\begin{definition}
    For $\theta$ fixed, the optimal discriminator $\phi_{h}^*$ is:
    \begin{equation}
        \phi_{h}^* = \underset{\phi_h}{argmax} \mathbb{E}\left [ \log_{}{P(Y|Z_c)}  \right ]
    \end{equation}
    \label{definition1}
\end{definition}

\begin{proposition}
    Denoting Kullback-Leibler divergence as $KL\left [ \cdot\left  | \right | \cdot \right ] $, for $\theta$ fixed, the optimal discriminator $\phi_{h}$ is $\phi_{h}^*$, s.t.:
    \begin{equation}
        KL\left [P(Y|g(Z_c)) \left |  \right | P_{\phi_{h}}(Y|Z_c) \right ]=0
    \end{equation}
    \label{proposition3}
\end{proposition}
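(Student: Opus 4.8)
The plan is to mirror the argument behind Proposition~1 of the GAN paper~\citep{goodfellowGenerativeAdversarialNets2014}, specialized to the conditional, categorical setting here. The single fact that does all the work is the following: for a fixed target distribution $q$ over the label space $\mathcal{Y}$, the map $p\mapsto\mathbb{E}_{Y\sim q}[\log p(Y)]$ attains its maximum over all probability distributions $p$ exactly at $p=q$. This is immediate from the identity $\mathbb{E}_{Y\sim q}[\log p(Y)]=-H(q)-KL[q\,||\,p]$ together with $KL[q\,||\,p]\ge 0$ (with equality iff $p=q$) and the fact that $-H(q)$ does not depend on $p$ --- the same Gibbs' inequality step that, in the GAN proof, shows $a\log y+b\log(1-y)$ is maximized at $y=a/(a+b)$.

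First I would make the objective in Definition~\ref{definition1} explicit. Since the only label information available for a given $Z_c$ is the target GNN's output on the generated subgraph $G_S=(A\odot g(Z_c),X)$, i.e.\ $P(Y\,|\,g(Z_c))$, the expectation there must be read as being over $Z_c$ drawn from the encoder's push-forward of the data distribution and, conditionally, over $Y\sim P(Y\,|\,g(Z_c))$, with integrand $\log P_{\phi_h}(Y\,|\,Z_c)$; that is,
\begin{equation}
\phi_h^*=\arg\max_{\phi_h}\ \mathbb{E}_{Z_c}\Big[\,\mathbb{E}_{Y\sim P(Y\,|\,g(Z_c))}\big[\log P_{\phi_h}(Y\,|\,Z_c)\big]\Big].
\end{equation}
Next, fixing $Z_c$ and applying the fact above with $q=P(Y\,|\,g(Z_c))$ and $p=P_{\phi_h}(Y\,|\,Z_c)$ gives
\begin{equation}
\mathbb{E}_{Y\sim P(Y\,|\,g(Z_c))}\big[\log P_{\phi_h}(Y\,|\,Z_c)\big]=-H\big(P(Y\,|\,g(Z_c))\big)-KL\big[P(Y\,|\,g(Z_c))\,\big|\big|\,P_{\phi_h}(Y\,|\,Z_c)\big].
\end{equation}
Because the entropy term is independent of $\phi_h$, maximizing the objective of Definition~\ref{definition1} is equivalent to minimizing $\mathbb{E}_{Z_c}\big[KL[P(Y\,|\,g(Z_c))\,||\,P_{\phi_h}(Y\,|\,Z_c)]\big]$, a nonnegative quantity. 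Under the standard non-parametric assumption (the discriminator family is rich enough to represent the conditional $P(Y\,|\,g(Z_c))$ pointwise in $Z_c$), this minimum is $0$ and is attained precisely when $P_{\phi_h}(Y\,|\,Z_c)=P(Y\,|\,g(Z_c))$ for (almost) every $Z_c$; denoting this maximizer $\phi_h^*$ then yields $KL[P(Y\,|\,g(Z_c))\,||\,P_{\phi_h^*}(Y\,|\,Z_c)]=0$, which is the claim.

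The genuinely delicate part is not the computation --- a one-line invocation of Gibbs' inequality --- but making the statement true by fixing its semantics. Two points must be stated up front: (i) the expectation in Definition~\ref{definition1} is taken against the target GNN's label distribution $P(Y\,|\,g(Z_c))$ on the reconstructed subgraph, not against the unavailable ``true'' $P(Y\,|\,Z_c)$ (indeed it is exactly $P(Y\,|\,g(Z_c))$ that the discriminator is meant to approximate); and (ii) the non-parametric / sufficiently-expressive-discriminator hypothesis is essential, since without it the expected KL divergence is only minimized within the chosen discriminator family and need not be driven to zero. I would flag both assumptions explicitly before running the short calculation above.
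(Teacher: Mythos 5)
Your argument is essentially the paper's own proof: the paper expands $I(Y;Z_c)=\mathbb{E}[\log P_{\phi_h}(Y|Z_c)]+H(Y)+KL[P(Y|Z_c)\,||\,P_{\phi_h}(Y|Z_c)]$ and, since $I(Y;Z_c)$ and $H(Y)$ are constant for fixed $\theta$, concludes that the maximizer in Definition~\ref{definition1} is the minimizer of the KL term --- which is exactly your Gibbs'-inequality step in aggregated rather than pointwise form. Your two explicit caveats (reading the expectation against $P(Y|g(Z_c))$ rather than the unavailable $P(Y|Z_c)$, and invoking discriminator expressiveness to drive the minimum to zero rather than merely identifying $\phi_h^*$ as the argmin) are precisely the points the paper's proof leaves implicit, so your version is, if anything, the more complete one.
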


The proof of Proposition \ref{proposition3} is as follows:

\begin{proof}
    According to the definition of mutual information, we have:
    \begin{equation}
        \begin{aligned}
            I(Y;Z_c)=\mathbb{E}\left [ \log_{}{P_{\phi_{h}}(Y|Z_c)}  \right ]  +H(Y)+\\
            KL\left [P(Y|Z_c) \left |  \right | P_{\phi_{h}}(Y|Z_c) \right ]
        \end{aligned}
    \end{equation}
    For $\theta$ fixed, $I(Y;Z_c)$ and $H(Y)$ are constant, so we have:
    \begin{equation}
        \phi_{h}^*=\underset{\phi_h} {argmin}KL\left [P(Y|Z_c) \left |  \right | P_{\phi_{h}}(Y|Z_c) \right ]
    \end{equation}
\end{proof}

Thus, by disregarding the constant $H(Y)$, the training criterion becomes:
\begin{equation}
    \begin{split}
        \theta _{E}^{*}=\underset{\theta_E }{argmax}\left \{ \underset{\phi _{h}^{} }{max} \mathbb{E}\left [  \log_{}{P_{\phi_{h}}(Y|Z_c)} \right ]  \right \} 
    \end{split}
\end{equation}

It is difficult to directly find the optimal parameters that satisfy the above expression, so we propose a two-step training strategy:
In the first step, the discriminator is trained to make predictions consistent with the target GNN based on the causal sub-features $Z_c$ in the latent space. The discriminator is a two-layer Multi-Layer Perceptron (MLP). More details of our model can be found in the Appendix~\citep{qiu_2024_13375562}. Since the latent features are dimensionality-reduced representations, the discriminator can achieve high accuracy.
In the second step, we fix the parameters of the discriminator to enforce the causal features to be distributed as much as possible in the dimensions we require. Then, the features are used by the decoder to generate explanations based on these causal features.

\subsection{Learning Objectives}

Post-hoc explanation refers to the process of providing an explanation or justification for a well-trained model. The outputs of GNN $\mathcal{F}$ are regarded as the training label (formalized as $Y = \mathcal{F}(A, X)$). The learning objective is designed to maximizing the mutual information between the model predictions and the underlying structure $G_S$:

\begin{equation}
\max _{G_s} \operatorname{I}\left(Y; G_s\right)=\max _{G_s}\left \{H\left(Y \right)-H\left(Y  \mid G=G_s\right) \right\}
\end{equation}

Our training process consists of two stages. In the first stage, the discriminator and auto-encoder are trained together. The motivation is to ensure that the auto-encoder can completely rejuvenate the original graph structure and train the discriminator to learn the causal attribution from causal features to the predicted outcome. The learning objective can be formulized as:

\begin{equation}
    \begin{split}
        \mathcal{L}=\mathcal{L_\mathrm{AE}}&+\lambda_1 * KL\left [P_{\phi_{h}}(Y|Z_c) \left |  \right | P(Y|g(Z_c)) \right ]\\
    &+\lambda_2 * KL\left [P(Y|g(Z_c)) \left |  \right | P(Y|X) \right ]
    \end{split}
\end{equation}

$\mathcal{L_\mathrm{AE}}$ is the loss of the autocoder (GAE or VGAE). 

In the second stage, the parameters of the trained discriminator are fixed. The learning objective of the second stage is:

\begin{equation}
\begin{split}
\mathcal{L}=\mathcal{L_\mathrm{AE}}+\mathcal{L_\mathrm{size}}+
\lambda_3 * KL\left [P(Y|g(Z_c)) \left |  \right | P_{\phi_{h_{fixed}}}(Y|Z_c) \right ]
\end{split}
\end{equation}

\begin{equation}
\mathcal{L_\mathrm{size}} =| \frac{\left\|\mathnormal{A} * g(Z_c)\right\|_1}{\|\mathnormal{A}\|_1} - \gamma|
\end{equation}

$\mathcal{L_\mathrm{size}}$ is the size loss to ensure that the mask generated by the decoder are within a reasonable range. In our experiment, the value of $\gamma$ is set to 0.5 in our experiments. $\mathnormal{A}$ is the adjacency matrix and $\gamma$ is a hyper parameter. Causal features $Z_c$ are transformed into masks of adjacency matrix format by the inner product decoder, which should be multiplied with the original adjacency matrix to serve as explanations.

\section{Evaluation}

\subsection{Datasets}
To verify the effectiveness of our proposed model, we conducted many experiments on various datasets. Like other GNN interpretation methods, we employed widely used synthetic and real-world datasets. For the node classification task, we use \textbf{BA-Shapes} and \textbf{Tree-Cycles} presented in~\citep{yingGNNExplainerGeneratingExplanations2019a} with gound-truth explanations, and for the graph classification task, we use real-world datasets \textbf{MUTAG}~\citep{debnathStructureactivityRelationshipMutagenic1991} and \textbf{NCI1}~\citep{waleComparisonDescriptorSpaces2008}. More details can be found in Appendix~\citep{qiu_2024_13375562}.

\subsection{Metrics}

\subsubsection{Fidelity} 
Fidelity is a commonly used metric to evaluate the faithfulness of the explanations to the model, which is defined as the difference of predicted probability/accuracy between the original predictions and the new predictions of masked input features given by the explainer~\citep{ribeiroWhyShouldTrust2016a}~\citep{yuanExplainabilityGraphNeural2023}. Intuitively, the local important input features identified by the interpreter are discriminative to the GNN model. In that case, the model's prediction should change significantly when these local features are removed. We measure this changement with the fidelity score. Analogously, keeping only discriminative features should lead to similar predictions as the original, even if we remove the other unimportant features. We measure that variation by the in-fidelity metric. In our experiment, we used fidelity based on the predicted probability to verify how much the model can fit the behavior of the original GNN, and it is computed as:
\begin{equation}
\text { Fidelity }^{\text {prob }}=\frac{1}{N} \sum_{i=1}^{N}\left(\mathcal{F}\left(\mathcal{G}_{i}\right)_{y_{i}}-\mathcal{F}\left(\mathcal{G}_{i}^{1-m_{i}}\right)_{y_{i}}\right)
\end{equation}
\begin{equation}
\text { Infidelity }^{\text {prob }}=\frac{1}{N} \sum_{i=1}^{N}\left(\mathcal{F}\left(\mathcal{G}_{i}\right)_{y_{i}}-\mathcal{F}\left(\mathcal{G}_{i}^{m_{i}}\right)_{y_{i}}\right)
\end{equation}
Here $\mathcal{G}_{i}$ is the original graph and $\mathcal{F}$ is the GNN model to be explained. $\mathcal{G}_{i}^{1-m_{i}}$ represents the new graph obtained by keeping features of $\mathcal{G}_{i}$ based on the complementary mask ${1-m_{i}} $. $ \mathcal{G}_{i}^{m_{i}}$ is the new graph by keeping important features of $\mathcal{G}_{i}$ based on hard mask $m_{i}$ of the explanation.

\subsubsection{Accuracy} 
For synthetic datasets with ground truths, we can leverage the underlying rules of building these datasets to identify important edges or nodes, such as motifs of the graph. Using these important features as references, we can compare the explanations generated by the explainer with the ground truth. Accuracy, ROC, and F1 scores are commonly employed metrics for such evaluations.

The model accuracy measures the predictive accuracy of the generated explanations in relation to the original inputs. Specifically, we input both the original graph and the explanation into the target GNN model and compare the resulting predictions. A higher model accuracy indicates a closer alignment between the explanatory subgraph and the predicted outcomes of the original inputs. This demonstrates that the explainer is more proficient at identifying the most relevant subgraph for the pre-trained GNN.

\subsubsection{Sparsity} 
Sparsity measures the conciseness of explanations, as different explanation methods yield various forms of explanations. Some methods may select a subset of edges or nodes as explanations, while others may generate global importance scores for edges or nodes. Sparsity calculation refers to the proportion of explanations (e.g., edges or nodes) compared to the original input graph. The accuracy and fidelity of explanations are often closely related to sparsity. When sparsity is low, meaning the explanations are more comprehensive and closer to the original input, the accuracy and faithfulness of those explanations tend to be higher. To ensure a fair comparison, it is necessary to set an appropriate threshold that guarantees the sparsity of explanations remains within the same order of magnitude.

\subsection{Baselines}

We consider several baseline models, including perturbation-based method (GNNExplainer), parameterized model-based method (PGExplainer), and generative models based on autoencoder (Gem and OrphicX). 
PGExplainer, OrphicX, and Gem are all methods that train an interpreter to explain a target GNN model. On the other hand, GNNExplainer requires multiple perturbation processes on each input to generate explanations. Gem and OrphicX are the closest baselines to our proposed method in this paper. We set the hyperparameters of these baseline models according to the reported settings in their respective papers.

\subsection{Quantitive Analysis}

\begin{table}[htb]
\setlength{\tabcolsep}{1mm}{
\begin{tabular}{@{}l|lllll|lllll@{}}
\toprule[1.5pt]
K           & \multicolumn{5}{c|}{BA-SHAPES}                                                                                                                                             & \multicolumn{5}{c}{TREE-CYCLES}                                                                                                                                          \\ 
\# of edges & \multicolumn{1}{c|}{5}             & \multicolumn{1}{c|}{6}             & \multicolumn{1}{c|}{7}             & \multicolumn{1}{c|}{8}             & \multicolumn{1}{c|}{9} & \multicolumn{1}{c|}{6}             & \multicolumn{1}{c|}{7}             & \multicolumn{1}{c|}{8}            & \multicolumn{1}{c|}{9}            & \multicolumn{1}{c}{10} \\ \midrule
GNNExp.     & \multicolumn{1}{l|}{67.6}          & \multicolumn{1}{l|}{82.4}          & \multicolumn{1}{l|}{82.4}          & \multicolumn{1}{l|}{88.2}          & 85.3                   & \multicolumn{1}{l|}{64.3}          & \multicolumn{1}{l|}{66.5}          & \multicolumn{1}{l|}{74.3}         & \multicolumn{1}{l|}{88.6}         & 97.1                   \\
PGExp.      & \multicolumn{1}{l|}{53.4}          & \multicolumn{1}{l|}{59.5}          & \multicolumn{1}{l|}{60.8}          & \multicolumn{1}{l|}{65.5}          & 68.5                   & \multicolumn{1}{l|}{76.2}          & \multicolumn{1}{l|}{81.5}          & \multicolumn{1}{l|}{91.3}         & \multicolumn{1}{l|}{95.4}         & 97.1                   \\
Gem         & \multicolumn{1}{l|}{64.7}          & \multicolumn{1}{l|}{76.4}          & \multicolumn{1}{l|}{89.5}          & \multicolumn{1}{l|}{\textbf{91.1} }          & \textbf{91.1}                   & \multicolumn{1}{l|}{74.2}          & \multicolumn{1}{l|}{85.7}          & \multicolumn{1}{l|}{\textbf{100}} & \multicolumn{1}{l|}{\textbf{100}} & \textbf{100}           \\
OrphicX     & \multicolumn{1}{l|}{61.7}          & \multicolumn{1}{l|}{61.7}          & \multicolumn{1}{l|}{73.5}          & \multicolumn{1}{l|}{76.4}          & 76.4                   & \multicolumn{1}{l|}{74.2}          & \multicolumn{1}{l|}{82.8}          & \multicolumn{1}{l|}{97.1}         & \multicolumn{1}{l|}{\textbf{100}} & \textbf{100}           \\
OrphicX-0     & \multicolumn{1}{l|}{61.7}          & \multicolumn{1}{l|}{73.5}          & \multicolumn{1}{l|}{73.5}          & \multicolumn{1}{l|}{76.4}          & 76.4                   & \multicolumn{1}{l|}{74.2}          & \multicolumn{1}{l|}{85.7}          & \multicolumn{1}{l|}{97.1}         & \multicolumn{1}{l|}{97.1} & \textbf{100}           \\
PAGE-GAE    & \multicolumn{1}{l|}{76.4}          & \multicolumn{1}{l|}{76.4}          & \multicolumn{1}{l|}{76.4}          & \multicolumn{1}{l|}{89.5}          & 89.5                   & \multicolumn{1}{l|}{74.2}          & \multicolumn{1}{l|}{82.8}          & \multicolumn{1}{l|}{97.1}         & \multicolumn{1}{l|}{97.1} & \textbf{100}           \\
\textbf{PAGE}         & \multicolumn{1}{l|}{\textbf{91.1}} & \multicolumn{1}{l|}{\textbf{91.1}} & \multicolumn{1}{l|}{\textbf{91.1}} & \multicolumn{1}{l|}{\textbf{91.1}} & \textbf{91.1}          & \multicolumn{1}{l|}{\textbf{74.3}} & \multicolumn{1}{l|}{\textbf{88.6}} & \multicolumn{1}{l|}{\textbf{100}} & \multicolumn{1}{l|}{\textbf{100}} & \textbf{100}           \\ \bottomrule[1.5pt]
\end{tabular}}
\vspace{0.2cm}
\caption{Explanation Accuracy on Synthetic Datasets (\%)}
\label{table1}
\end{table}

\begin{table}[htb]
\setlength{\tabcolsep}{1mm}{
\begin{tabular}{@{}l|ccccc|ccccc@{}}
\toprule[1.5pt]
           & \multicolumn{5}{c|}{Mutagenicity}                                                                                                                                 & \multicolumn{5}{c}{NCI1}                                                                                                                                          \\
1-Sparsity & \multicolumn{1}{c|}{0.5}           & \multicolumn{1}{c|}{0.6}           & \multicolumn{1}{c|}{0.7}           & \multicolumn{1}{c|}{0.8}           & 0.9           & \multicolumn{1}{c|}{0.5}           & \multicolumn{1}{c|}{0.6}           & \multicolumn{1}{c|}{0.7}           & \multicolumn{1}{c|}{0.8}           & 0.9           \\ \midrule
GNNExp.    & \multicolumn{1}{c|}{65.0}          & \multicolumn{1}{c|}{66.6}          & \multicolumn{1}{c|}{66.4}          & \multicolumn{1}{c|}{71.0}          & 78.3          & \multicolumn{1}{c|}{64.2}          & \multicolumn{1}{c|}{65.7}          & \multicolumn{1}{c|}{68.6}          & \multicolumn{1}{c|}{75.2}          & 81.8          \\
PGExp.     & \multicolumn{1}{c|}{59.3}          & \multicolumn{1}{c|}{58.9}          & \multicolumn{1}{c|}{65.1}          & \multicolumn{1}{c|}{70.3}          & 74.4          & \multicolumn{1}{c|}{57.7}          & \multicolumn{1}{c|}{60.8}          & \multicolumn{1}{c|}{65.2}          & \multicolumn{1}{c|}{69.3}          & 71.0          \\
Gem        & \multicolumn{1}{c|}{\textbf{66.4}} & \multicolumn{1}{c|}{67.7}          & \multicolumn{1}{c|}{71.4}          & \multicolumn{1}{c|}{\textbf{76.5}} & 81.8          & \multicolumn{1}{c|}{\textbf{61.8}} & \multicolumn{1}{c|}{65.6}          & \multicolumn{1}{c|}{70.6}          & \multicolumn{1}{c|}{74.9}          & 83.9          \\
OrphicX    & \multicolumn{1}{c|}{61.9}          & \multicolumn{1}{c|}{66.6}          & \multicolumn{1}{c|}{\textbf{72.8}} & \multicolumn{1}{c|}{76.0}          & 80.1          & \multicolumn{1}{c|}{59.1}          & \multicolumn{1}{c|}{60.8}          & \multicolumn{1}{c|}{67.1}          & \multicolumn{1}{c|}{76.3}          & 79.3          \\
OrphicX-0     & \multicolumn{1}{l|}{65.6}          & \multicolumn{1}{l|}{66.1}          & \multicolumn{1}{l|}{70.8}          & \multicolumn{1}{l|}{73.5}          & 78.5                   & \multicolumn{1}{l|}{58.1}          & \multicolumn{1}{l|}{61.3}          & \multicolumn{1}{l|}{66.4}         & \multicolumn{1}{l|}{72.7} & 79.1          \\
PAGE-GAE     & \multicolumn{1}{l|}{61.7}          & \multicolumn{1}{l|}{66.9}          & \multicolumn{1}{l|}{70.5}          & \multicolumn{1}{l|}{76.4}          & 80.4                   & \multicolumn{1}{l|}{56.2}          & \multicolumn{1}{l|}{60.8}          & \multicolumn{1}{l|}{65.1}         & \multicolumn{1}{l|}{74.9} & 79.1           \\
\textbf{PAGE}       & \multicolumn{1}{c|}{63.5}          & \multicolumn{1}{c|}{\textbf{68.4}} & \multicolumn{1}{c|}{72.3}          & \multicolumn{1}{c|}{\textbf{76.5}} & \textbf{83.4} & \multicolumn{1}{c|}{58.1}          & \multicolumn{1}{c|}{\textbf{66.4}} & \multicolumn{1}{c|}{\textbf{70.8}} & \multicolumn{1}{c|}{\textbf{77.4}} & \textbf{85.4} \\ \bottomrule[1.5pt]
\end{tabular}}
\vspace{0.2cm}
\caption{Explanation Accuracy on Real-World Datasets (\%)}
\label{table2}
\end{table}

\begin{table}[htb]
\setlength{\tabcolsep}{2mm}{
\begin{tabular}{@{}l|c|c|c|c|c@{}}
\toprule[1.5pt]
Datasets    & GNNExp. & PGExp. & Gem  & OrphicX & \textbf{PAGE}  \\ \midrule
BA-SHAPES   & 90.8    & 76.3   & 65.4 & 94.2    & \textbf{94.2} \\ \midrule
TREE-CYCLES & 91.2    & 77.2   & 76.3 & 94.2    & \textbf{96.7}  \\ \bottomrule[1.5pt]
\end{tabular}}
\vspace{0.2cm}
\caption{Edge Accuracy of Explanation (\%)}
\label{table3}
\end{table}

\begin{figure}[htb]
\centering
\vspace{-1cm}
\setlength{\abovecaptionskip}{-1.5cm}
\setlength{\belowdisplayskip}{3pt} 
\includegraphics[width=1\columnwidth]{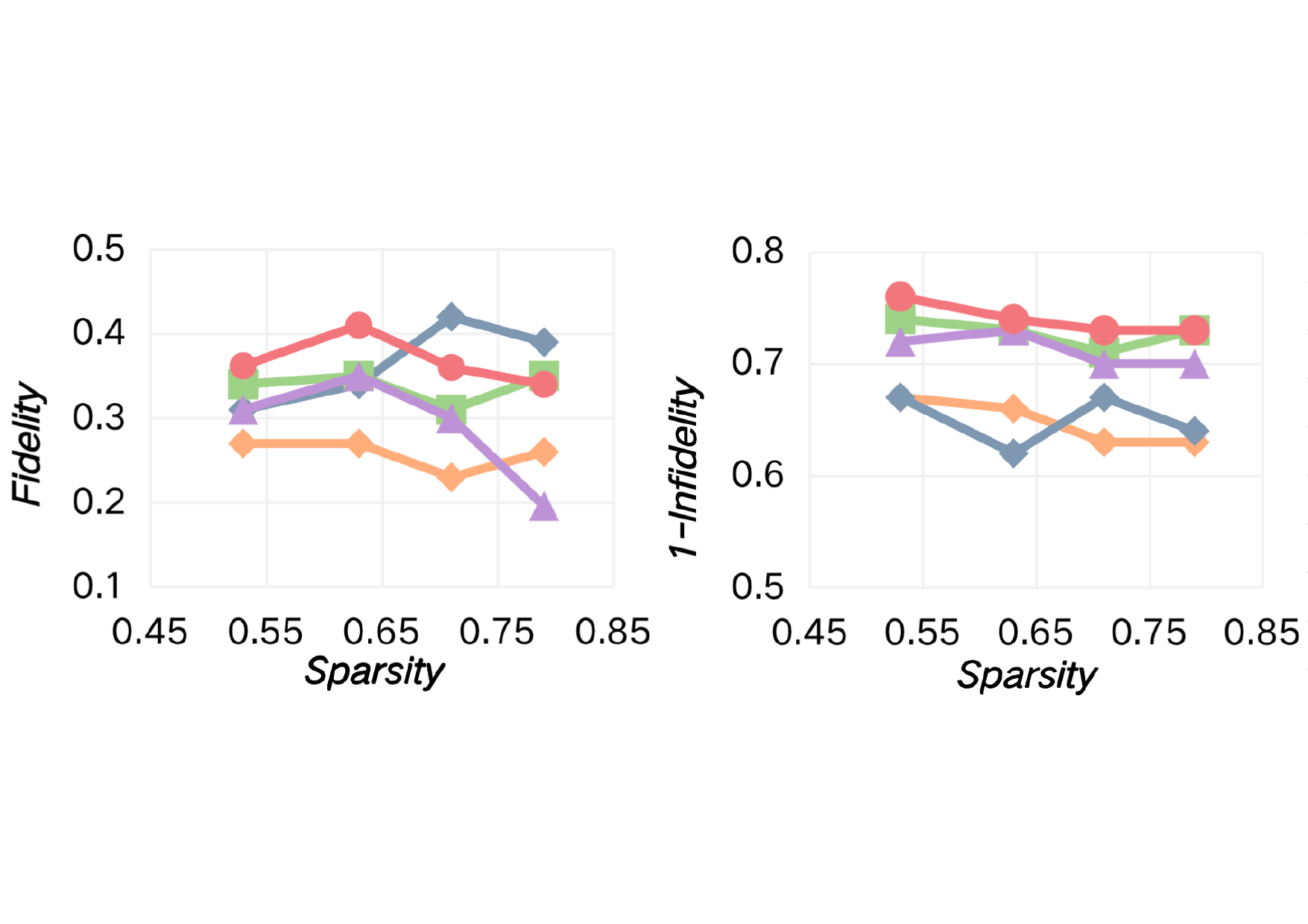} 
\vspace{0.5cm}
\caption{Fidelity and 1-Infidelity vs. Sparsity on BA-Shapes.}
\label{figure4}
\vspace{0.5cm}
\end{figure}

\begin{figure}[htb]
\centering
\vspace{-1cm}
\setlength{\abovecaptionskip}{-1.5cm}
\setlength{\belowdisplayskip}{3pt} 
\includegraphics[width=1\columnwidth]{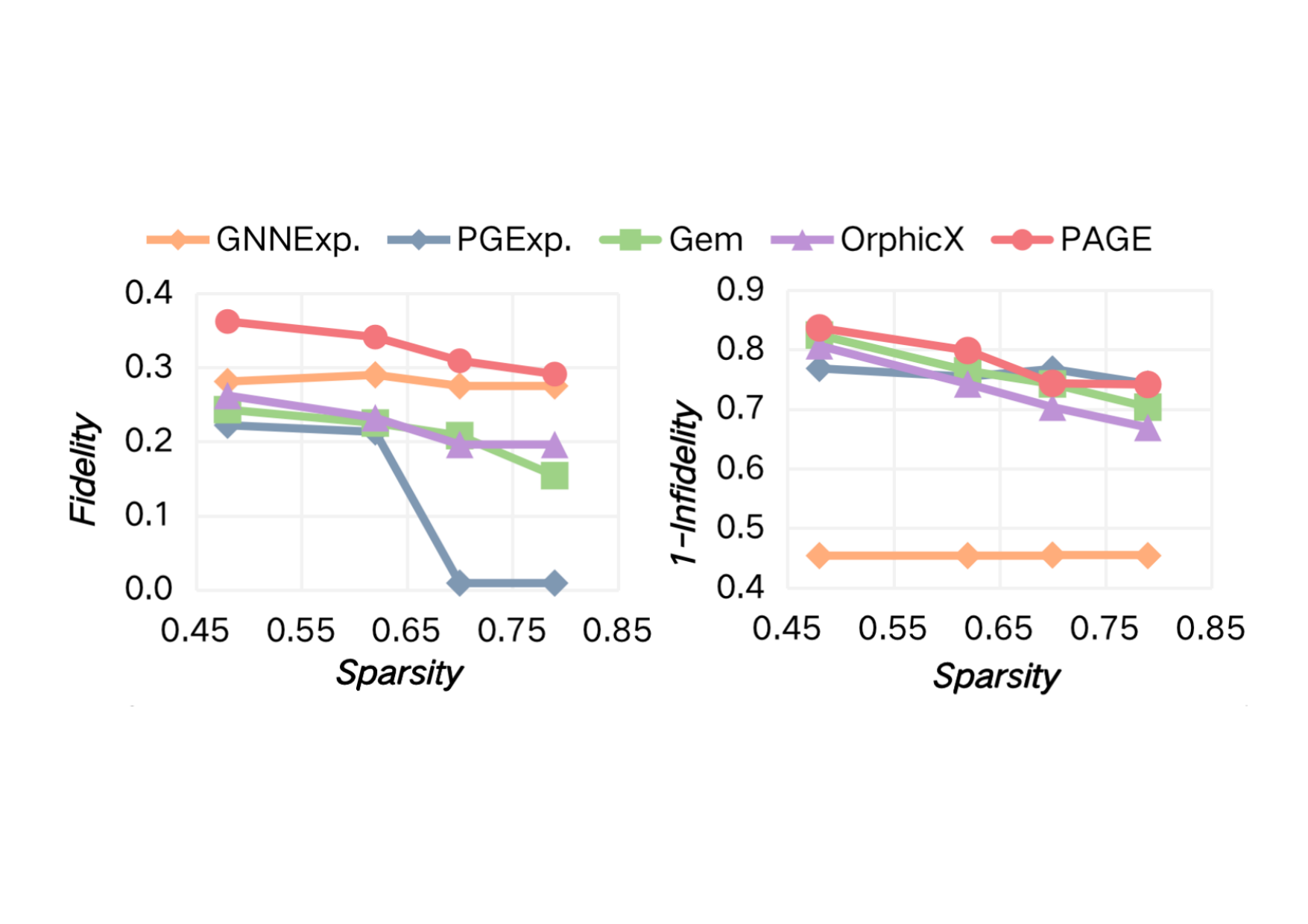} 
\vspace{0.5cm}
\caption{Fidelity and 1-Infidelity vs. Sparsity on Mutagenicity.}
\label{figure3}
\vspace{0.5cm}
\end{figure}

Similar to previous works~\citep{yuanExplainabilityGraphNeural2023}, our evaluation of the explainer's performance primarily focuses on two performance metrics: model accuracy and fidelity. Accuracy reflects how closely the GNN's explanations align with the original inputs. Table \ref{table1} and \ref{table2} respectively illustrate the performance of different explanation models on artificial synthetic datasets and real-world datasets under different sparsity constraints.

It should be noted that for the artificial synthetic dataset, we used a constraint of the top K edges for different models, instead of sparsity. This practice aligns with conventions in previous works. A unified K allows us to compare the concordance between explanations and ground truths. We didn't choose a value smaller than 5 for K because explanations formed by selecting fewer than 5 edges wouldn't comprehensively cover the ground-truth motifs, and such explanations would lack meaningful context. For instance, for the Tree-cycle dataset, choosing any k edges (k\textless5) from a cycle motif for explanation would not make any sense.

According to Table \ref{table1} and \ref{table2}, we observe that the accuracy of our explanations surpasses that of baseline models both on the synthetic datasets and real-world datasets. For the artificial synthetic dataset, our explanations achieve optimal accuracy under different constraints of K. For real-world datasets, our method outperforms other approaches under most sparsity constraints and maintains a higher average accuracy. We conducted experiments using both GAE and VGAE as interpreters. The experimental results indicated that the performance of GAE-based model was surpassed by the VGAE-based. Consequently, for the subsequent experiments, we consistently chose VGAE as the foundational model for our interpreter. In addition, OrphicX-0 in the table represents the results obtained by removing the calculation of information flow from the loss function of OrphicX. The table shows that the information flow within OrphicX does not effectively capture the causal effects between hidden features and model predictions. Even after it was removed, the performance of the explainer did not significantly decline.

Indeed, different explanations could potentially lead to similar classification results as the original samples. Therefore, to further compare the performances among   different explaining methods, we reported the evaluation of edge accuracy on the synthetic dataset. By transforming the inclusion of each edge in the explanation into a binary classification problem, we can assess the concordance between the generated edges and the ground truth motifs. The results are presented in Table \ref{table3}. A higher accuracy indicates that the explainer tends to assign higher importance scores to edges of the ground truth motifs/subgraphs. Experimental results demonstrate that explanations generated by our approach are superior to the others. 

Furthermore, we use the fidelity and infidelity metrics to compare the quality of explanations generated by different methods under four different sparsity levels. Fidelity and Infidelity metrics can provide an alternative perspective on the quality of explanations.We compare our model and baseline methods on the BA-Shapes and Mutagenicity datasets, as shown in Figure \ref{figure3} and \ref{figure4} (using 1-Infidelity as the y-axis for easy comparison), indicating that our method generally exhibits better fidelity in most cases.

\subsection{Qualitative Analysis}

\begin{figure*}[htb]
\centering
\includegraphics[width=1.8\columnwidth]{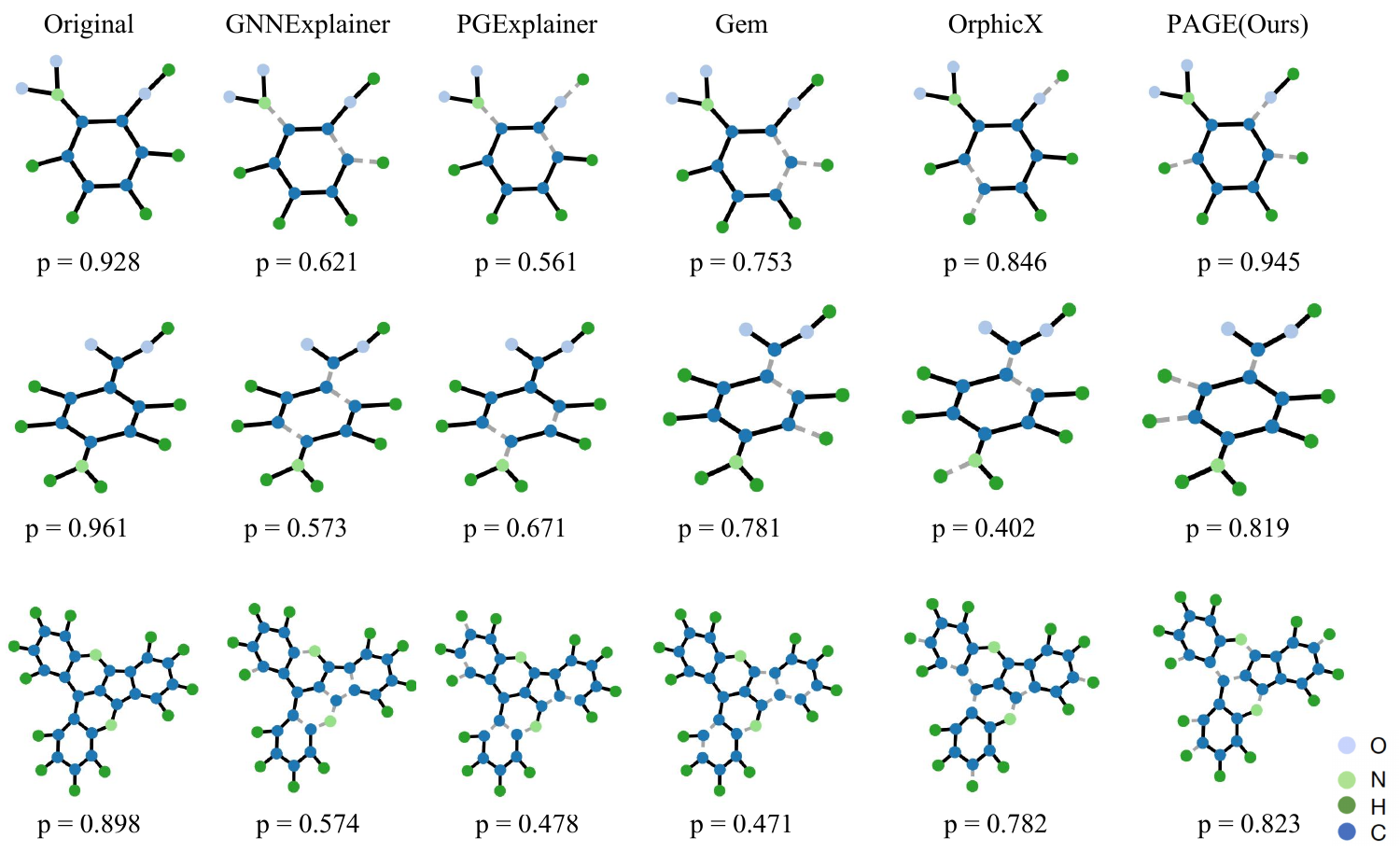} 
\caption{Visualized explanations on Mutagenicity. The explanation results of different methods are highlighted with black edges, where the gray edges are regarded as non-casual parts for the prediction. "P" under each graph/subgraph denotes the probability of being classified into the mutagenic class, which is obtained by feeding the associated graph/subgraph into the target GNN.}
\label{figure5}
\vspace{0.8cm}
\end{figure*}

To further demonstrate the interpretability of the explanations, Figure \ref{figure5} illustrates some visualized explanation instances generated by different methods on the Mutagenicity dataset. The figure presents three different mutagenic molecules. The first column represents the initial molecule, while the remaining five columns showcase explanations generated by different methods, all adhering to a common sparsity constraint. Black edges indicate the edges the interpreter selects, while gray edges signify those not selected. The value of p represents the probability that the initial molecule (or the explained) possesses mutagenic potential given by the target GNN. In the first two cases, our method is the only one capable of capturing the complete carbon ring and the connected -NO2 functional group attaching to it, which is always regarded as a feature of being mutagenic. In the second case, OrphicX failed to recognize the -NH2 functional group, and the explanation is predicted as non-mutagenic. In the third example, we report a mutagenic molecule with no explicit motifs. PAGE produced an explanation that most closely matched the original prediction. Such explanations could aid in uncovering novel chemical patterns. In summary, compared to baseline models, our approach generates explanations that best reflect the behavior of the target GNN.

\subsection{Ablation Study}
\begin{table}[htb]
    \setlength{\tabcolsep}{2.6mm}{
    \begin{tabular}{@{}l|lll|lll@{}}
        \toprule[1.5pt]
                        & \multicolumn{3}{c|}{BA-SHAPES(top k)}                        & \multicolumn{3}{c}{Mutagenicity(1-Sparsity)}                 \\
                        & \multicolumn{1}{l|}{6}    & \multicolumn{1}{l|}{7}    & 8    & \multicolumn{1}{l|}{0.6}  & \multicolumn{1}{l|}{0.7}  & 0.8  \\ \midrule
        None            & \multicolumn{1}{l|}{59.5} & \multicolumn{1}{l|}{59.5} & 59.5 & \multicolumn{1}{l|}{61.9} & \multicolumn{1}{l|}{62.4} & 65.2 \\ \midrule
        ERM             & \multicolumn{1}{l|}{76.4} & \multicolumn{1}{l|}{88.5} & 88.5 & \multicolumn{1}{l|}{64.6} & \multicolumn{1}{l|}{69.7} & 75.4 \\ \midrule
        No Pre-training & \multicolumn{1}{l|}{77.3} & \multicolumn{1}{l|}{89.5} & 89.5 & \multicolumn{1}{l|}{65.5} & \multicolumn{1}{l|}{69.7} & 76.7 \\ \midrule
        FULL            & \multicolumn{1}{l|}{\textbf{91.1}} & \multicolumn{1}{l|}{\textbf{91.1}} & \textbf{91.1} & \multicolumn{1}{l|}{\textbf{66.4}} & \multicolumn{1}{l|}{\textbf{70.8}} & \textbf{77.4} \\ \bottomrule[1.5pt]
        \end{tabular}}
        \vspace{0.2cm}
    \caption{Ablation Study Result. (\textit{1, None: only autoencoder loss and size loss.}\textit{2, ERM: the empirical term $KL\left [P(Y|g(Z_c)) \left |  \right | P(Y|X) \right ]$ added.}\textit{3, No pre-training: do not pre-train the discriminator.}\textit{4, FULL: fully trained.})}
    \label{table4}
    \end{table}

To further investigate the effectiveness of designed components in PAGE, we conducted ablation experiments on both the artificially synthesized dataset BAShapes and the real-world dataset Mutagenicity. Specifically, we validated the performance of the interpreter under four conditions. The result is shown in Table~\ref{table4}, showing that applying only partial component cannot obtain optimal performance. Additionally, we conducted experiments on hyperparameter sensitivity. Please refer to the supplementary materials for more details.
    
\subsection{Efficiency Study}

\begin{table}[htb]
\setlength{\tabcolsep}{2.8mm}{
\begin{tabular}{@{}c|c|c|c|c|c@{}}
\toprule[1.5pt]
time(ms)  & GNNExp. & PGExp.  & Gem     & OrphicX & \textbf{PAGE}    \\ \midrule
training     & -       & 340,148 & 24,041 & 368,718 & \textbf{78,233} \\ \midrule
inference & 533,681  & 354     & 79      & \textbf{77}      & 91     \\ \midrule
total     & 533,681  & 340,502 & 24,120 & 368,795 & \textbf{78,324} \\ \bottomrule[1.5pt]
\end{tabular}}
\vspace{0.2cm}
\caption{Training and inference time(ms) on Mutagenicity.}
\label{table5}
\end{table}

GNNExplainer requires multiple perturbations on a single sample. PGExplainer necessitates generating soft masks individually for each edge. GEM involves considering each edge to obtain a subgraph for the "guidance" of the training. OrphicX computes information flow through sampling at different scales. Compared to these methods, our approach eliminates any perturbation or sampling processes. Each inference and backpropagation involve only a single computation, resulting in a time complexity of O(1). As a result, our method remains significantly more efficient than these baseline models. Table \ref{table4} presents the training and inference times for the mentioned models on the Mutagenicity dataset. All models are configured with hyperparameters, learning rates, and epoch numbers as described in their original papers. It needs to be specified that, although experiments demonstrate that Gem has shorter training and inference times compared to ours, Gem requires an additional distillation process to generate guidances for the training of the autoencoder. This distillation incurs significant costs (more than 150,000 ms on the Mutagenicity dataset). Consequently, considering the overall expenses, our method's efficiency still surpasses all other baseline models.

\section{Conclusion}

In this article, we introduce PAGE, a parametric generative Graph Neural Network (GNN) explaining method designed to generate concise and reliable causal explanations for any graph neural network. PAGE optimize a generative autoencoder with a learning objective of maximizing mutual information between latent features and outputs. Compared to existing methods, PAGE offers several advantages: its computation and inference processes do not require any perturbation or sampling processes, ensuring high explanation accuracy while maintaining greater efficiency than previous approaches. Moreover, as a model-agnostic post-hoc explanation approach, it can offer causal explanations for different types of GNNs without relying on any prior assumptions or internal model details. We demonstrate the superiority of our approach over baseline models through experiments and data analysis. A limitation of PAGE is that we only explored autoencoders as interpreters. An avenue for potential improvement could involve using more powerful generative models as interpreters. We leave this for future investigation.

\begin{ack}
    This work is supported by National Natural Science Foundation of China under grants 62376103, 62302184, 62206102, Science and Technology Support Program of Hubei Province under grant 2022BAA046, and CCF-AFSG Research Fund.
    \end{ack}


\clearpage

\bibliography{mybibfile}

\clearpage

\begin{appendix}
\section{Experiments details in Section 3.}

\subsection{Datasets}
BA-Shapes and Tree-cycles are synthetic node classification datasets. BA-Shapes is composed of 80 house motifs and a base BarabasiAlbert (BA) graph containing 300 nodes. Node labels are divided into four categories, representing vertices, middle, and bottom parts of the houses, or not belonging to any motifs. Tree-cycles consist of a base eight-level binary balanced trees with 80 six-node cyclic motifs. Node labels are binary and denote whether the node belongs to the cycle motif. Mutagenicity and NCI1 are real-world graph classification datasets. Mutagenicity comprises 4337 unique chemical molecules, where nodes represent individual atoms and edges denote different chemical bonds. The labels indicate whether the chemical molecule is mutagenic. NCI1 includes 4110 chemical compounds with labels indicating whether the compound inhibits cancer cell growth. Table \ref{table5} provides additional details about these datasets.

\subsection{Experimental Hardware}
All experiments were conducted on a PC equipped with an NVIDIA RTX 4070 Ti GPU and an Intel Core i5-13600KF processor, supported by 32GB of RAM and 12GB of graphics memory.
\subsection{Details about the target GNN}
For the target GNN to be explained, we followed the same experimental setup as previous works. Specifically, for node classification, we employed three layers of Graph Convolutional Networks (GCNs) with output dimensions set to 20. We concatenated the outputs from these three layers and subsequently subjected them to a linear transformation to derive the node labels. In the case of graph classification, we utilized three layers of GCNs with dimensions of 20 and conducted global max-pooling to generate the graph representations. A subsequent linear transformation layer was employed to derive the graph labels. The target GNN achieved accuracies of 94.1, 97.1, 88.5, and 78.6 on the BA-Shapes, Tree-cycles, Mutagenicity, and NCI1 datasets, respectively.
\subsection{Details about the explainer}
For the encoder of explainer, we applied a three-layer GCN with output dimensions 32, 32, and 16. The decoder is equipped with a two-layer MLP and an inner product decoder. The discriminator was implemented using a two-layer MLP with a hidden layer dimension of 32. We trained the explainers using a learning rate of 0.003 for 300 epochs in the first stage and 50 epochs in the second stage. All of our experiments and models, including the target GNN, our interpreter, and the baseline models, were implemented using PyTorch and trained with Adam optimizer.
\begin{table}[tbp]
\setlength{\tabcolsep}{2mm}{
\begin{tabular}{@{}c|c|c|c|c@{}}
\toprule[1.5pt]
Datasets     & BA-Shapes & Tree-cycles & Mutagenicity & NCI1        \\ \midrule
\#Graphs     & 1         & 1           & 4337         & 4110        \\ \midrule
\#Nodes      & 700       & 871         & 30.32(Avg.)  & 32.30(Avg.) \\ \midrule
\#Edges      & 4110      & 1950        & 30.77(Avg.)  & 29.87(Avg.) \\ \midrule
\#Labels     & 4         & 2           & 2            & 2           \\ \midrule
\#Training   & 300       & 270         & 3468         & 3031        \\ \midrule
\#Validation & 50        & 45          & 434          & 411         \\ \midrule
\#Testing    & 50        & 45          & 434          & 410         \\ \bottomrule[1.5pt]
\end{tabular}}
\vspace{0.2cm}
\caption{Details about the datasets.}
\label{table6}
\vspace{0.5cm}
\end{table}

\subsection{Downstream Tasks}

\textbf{Node-Level Task} 
For node-level tasks, such as node classification, the interpretation of the target node is designed as the subgraph with the strongest correlation with the prediction result on the computation graph. For a graph with more than thousands of nodes, it is impractical to reconstruct the complete graph for interpreting a single node; Moreover, the auto-encoder can not complete the training based on only one single graph. Therefore, for node-level tasks, we will obtain the subgraph composed of $k$-hop neighbors of each node for training, leading to a compact study on the whole graph (The value of $k$ is set as the number of layers in the target GNN.). When interpreting a specific node, we will use the subgraph consisting of no more than $k$-hop reconstructed by the encoder for prediction, and these subgraphs are considered as the most relevant and influential part of the input graph to the prediction.

\textbf{Graph-Level Task}
For graph-level tasks, such as graph classification, we will let the auto-encoder learn to reconstruct the complete graph structure. When making predictions, a readout function (can be average pooling or maximal pooling) is employed to obtain the graph-level representation from the latent feature $Z$. Then the discriminator employs the graph representation to make predictions. Experimental results show that this austere readout function is efficient and effective under the constraints of our designed optimizing objective.

\subsection{Setting of sparsity hyperparameter $\gamma$}

Regarding the setting of sparsity, although we recommend setting the sparsity hyperparameter $\gamma$ to 0.5 to generate an importance score matrix that is relatively balanced, we still suggest adjusting the sparsity differently based on different interpretability requirements to achieve better interpretability (for example, when a concise explanation is needed, lowering the sparsity as much as possible).

\subsection{Hyperparameter sensitivity.}

\begin{figure*}[htb]
    \vspace{-18cm}
    \includegraphics[width=2\columnwidth]{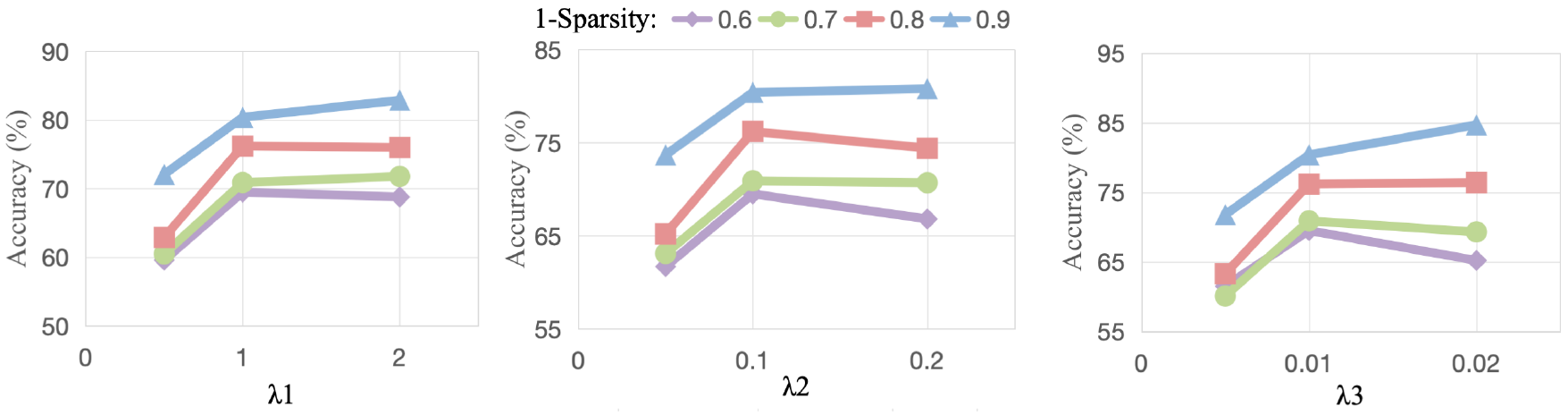} 
    \caption{Results about hyperparameter sensitivity.}
    \label{figure6}
    \vspace{0.8cm}
    \end{figure*}

To analyze the impact of different hyperparameter settings on the performance of PAGE, we conducted hyperparameter sensitivity experiments on Mutagenicity. We separately varied the values of the hyperparameters $\lambda_1$, $\lambda_2$, and $\lambda_3$ while keeping the default parameter settings ($\lambda_1=1.0$, $\lambda_2=0.1$, $\lambda_3=0.01$), and observed the explanation accuracy under different sparsity levels. The experimental results, as shown in Figure \ref{figure6}, indicate that all three hyperparameters have varying influence on the model's performances. This not only demonstrates the effectiveness of the discriminator module we designed but also underscores the necessity of selecting appropriate values for hyperparameters.

\end{appendix}

\end{document}